\documentclass[11pt]{article}

\usepackage[margin=1in]{geometry}
\usepackage{amsmath,amssymb}
\usepackage{amsthm}
\usepackage{booktabs}

\newtheorem{proposition}{Proposition}
\theoremstyle{remark}
\newtheorem{remark}{Remark}
\usepackage{graphicx}
\usepackage{adjustbox}
\usepackage{hyperref}
\usepackage[numbers,sort&compress]{natbib}

\title{Distributional Split Criteria for Random Forests:\\
Extensions, Shrinkage, and the Robustness of Mean Splitting}
\author{Silas Koemen}
\date{\today}

\newcommand{\mmd}{\mathrm{MMD}}

\begin{document}
\maketitle

\begin{abstract}
Distributional random forests replace mean-based CART splitting with criteria
that compare the full conditional response distribution in candidate children.
We implement and systematically study a family of such criteria inside a single
honest-forest implementation: isotropic random-Fourier-feature maximum mean
discrepancy (MMD), an anisotropic diagonal-bandwidth variant, an adaptive
per-split frequency-selection variant, and a non-kernel sliced-Wasserstein
criterion, together with post-hoc kernel-mean shrinkage of the forest weights.
Using paired-seed comparisons across synthetic quantile mechanisms, real
univariate benchmarks, a California-housing subsample curve, and multivariate
synthetic and real responses, we characterize where each extension pays.  Three
findings recur.  First, among distributional criteria ordinary isotropic MMD is
already close to best in class: the anisotropic, adaptive-frequency, and
sliced-Wasserstein extensions, and post-hoc shrinkage, do not systematically
improve on it.  Second, on scalar tabular regression mean-based CART splitting
remains the robust default and wins many cells.  Third, multivariate responses
are the regime where distributional splitting clearly earns its keep, most
sharply on a pure-dependence copula where the energy score separates the
criteria even though marginal CRPS does not.  The evidence supports a simple
allocation story: distributional splitting helps only when non-location
structure is both present and estimable; otherwise it dilutes split-selection
power away from the mean.  All criteria, the honest forest, and the
paired-comparison harness are implemented in the open-source \texttt{drforest}
library, whose Rust-backed split search makes broad criterion sweeps inexpensive.
\end{abstract}

\section{Question}

Random forests for distributional prediction estimate not just a conditional
mean, but a conditional law.  Quantile regression forests do this with standard
CART tree structure and distributional leaf weights \citep{meinshausen2006}.
Distributional random forests instead propose splitting rules that compare the
full response distributions in candidate children \citep{cevid2022}.  This is a
natural generalization, especially for multivariate responses where a scalar
ordering or separate marginal quantile fits are unsatisfactory.

This note studies that design space directly.  Inside one honest-forest
implementation we add four distributional split criteria---isotropic MMD, a
diagonal-bandwidth anisotropic variant, an adaptive per-split
frequency-selection variant, and sliced Wasserstein---and a family of post-hoc
shrinkage variants, and compare them by paired seed.  Two questions organize the
study: \emph{among distributional criteria, does any extension beat the plainest
MMD?} and \emph{against mean splitting, when is distributional splitting worth
its variance?}  The answers are, respectively, ``not systematically'' and ``for
multivariate responses and a few structured scalar problems,'' with CART the
robust default elsewhere.  We test the multivariate regime explicitly, since it
is what motivates distributional forests in the first place.

\section{Split Criteria as Signal Allocation}

For a node with observations $(X_i,Y_i)$, a candidate split $A \mid A^c$ is
chosen by maximizing an empirical separation score between the child responses.
CART uses the between-group mean separation
\[
  \frac{n_L n_R}{n^2}\|\bar Y_L-\bar Y_R\|^2,
\]
which is the identity-kernel mean-embedding score.  Random Fourier feature MMD
splitting replaces $Y$ by a complex feature map $\psi(Y)$ and scores
\[
  \frac{n_L n_R}{n^2}
  \left\| \frac{1}{n_L}\sum_{i\in L}\psi(Y_i)
        - \frac{1}{n_R}\sum_{i\in R}\psi(Y_i) \right\|^2 .
\]
With enough features and enough data this score can detect changes beyond the
mean.  The two scores share an exact finite-sample structure: each is a biased
squared mean-embedding statistic whose expectation is the population separation
plus an $O(n^{-1})$ noise floor set by the embedded-response covariance
(Appendix~\ref{app:finite}).  The embedding, not the form of the statistic, is
all that changes between CART and MMD.

This suggests the working characterization tested below.  CART concentrates on
the dominant location signal.  MMD spreads sensitivity across location and
higher-order distributional structure.  At finite node sizes the usefulness of
distributional splitting is governed by the kernel-mean contrast relative to the
covariance of the embedded response, together with random-feature and
split-search noise; universal kernel sensitivity does not imply high split-time
power for every alternative.  MMD should therefore win only when the additional
structure is strong enough, or the sample size large enough, for its split-time
signal-to-noise ratio to exceed the cost of estimating it (Appendix~\ref{app:finite}).

We compare five criteria: CART; isotropic Gaussian RFF MMD (\texttt{mmd\_rff});
diagonal-bandwidth MMD (\texttt{anisotropic\_mmd}); an overcomplete RFF pool with
per-split top-$k$ frequency selection (\texttt{adaptive\_mmd}); and a non-kernel
sliced-Wasserstein criterion.\footnote{Diagonal-bandwidth MMD is reported only
for multivariate responses.  On a one-dimensional response it is not a distinct
estimator: the coordinatewise median bandwidth coincides with the Euclidean
median, and the length-one frequency draw is the same realization as the
isotropic scalar-scale draw, so \texttt{anisotropic\_mmd} reproduces
\texttt{mmd\_rff} bit-for-bit and is omitted from the scalar tables.}

\section{Evidence}

The experiments were run as paired-seed comparisons: the train/test split,
forest seeds, tree hyperparameters, honesty setting, and cutpoint grid were held
fixed while varying only the split criterion.  Metrics are RMSE for the
conditional mean and CRPS / energy score for distributional prediction.  In
every table a negative difference means the row improves over the reference, and
the parenthetical is the seed-level win rate.

\paragraph{Synthetic diagnostics.}
The first set of experiments swept the univariate synthetic quantile mechanisms
used in the DRF literature (Table~\ref{tab:synthetic}).  When the conditional law
carries deliberately strong non-location structure, distributional splitting
pays: on \texttt{paper\_quantile\_2} and \texttt{paper\_quantile\_3} with honesty
$0.5$, every distributional criterion improves CRPS and energy over CART with a
100\% win rate while leaving RMSE essentially unchanged.  On the more
location-driven \texttt{paper\_quantile\_1} the comparison is a wash.  Honesty is
a visible lever in its own right: the gap between criteria at honesty $0$ largely
collapses once sample splitting is turned on.

\begin{table}[h]
  \centering
  \caption{Paired differences (criterion $-$ \texttt{cart}). Negative means the row improves over the reference; values are mean $\pm$ paired SE with seed-level win rate.}
  \label{tab:synthetic}
  \begin{adjustbox}{max width=\textwidth}
  \begin{tabular}{lccccccc}
    \toprule
    dataset & $n_{\mathrm{tr}}$ & honesty & criterion & pairs & $\Delta$RMSE & $\Delta$CRPS & $\Delta$energy \\
    \midrule
    paper\_quantile\_1 & 1500 & 0 & adaptive\_mmd & 10 & -0.0069 $\pm$ 0.0012 (90\%) & -0.0032 $\pm$ 0.0006 (100\%) & -0.0032 $\pm$ 0.0006 (100\%) \\
    paper\_quantile\_1 & 1500 & 0 & mmd\_rff & 10 & -0.0026 $\pm$ 0.0011 (80\%) & -0.0006 $\pm$ 0.0006 (60\%) & -0.0006 $\pm$ 0.0006 (60\%) \\
    paper\_quantile\_1 & 1500 & 0 & sliced\_wasserstein & 10 & -0.0011 $\pm$ 0.0008 (60\%) & -0.0001 $\pm$ 0.0005 (40\%) & -0.0001 $\pm$ 0.0005 (40\%) \\
    paper\_quantile\_1 & 1500 & 0.5 & adaptive\_mmd & 10 & +0.0003 $\pm$ 0.0006 (20\%) & +0.0001 $\pm$ 0.0003 (30\%) & +0.0001 $\pm$ 0.0003 (30\%) \\
    paper\_quantile\_1 & 1500 & 0.5 & mmd\_rff & 10 & +0.0002 $\pm$ 0.0003 (40\%) & +0.0001 $\pm$ 0.0002 (30\%) & +0.0001 $\pm$ 0.0002 (30\%) \\
    paper\_quantile\_1 & 1500 & 0.5 & sliced\_wasserstein & 10 & -0.0000 $\pm$ 0.0004 (50\%) & -0.0001 $\pm$ 0.0002 (50\%) & -0.0001 $\pm$ 0.0002 (50\%) \\
    paper\_quantile\_2 & 1500 & 0 & adaptive\_mmd & 10 & -0.0082 $\pm$ 0.0018 (100\%) & -0.0089 $\pm$ 0.0011 (100\%) & -0.0089 $\pm$ 0.0011 (100\%) \\
    paper\_quantile\_2 & 1500 & 0 & mmd\_rff & 10 & -0.0026 $\pm$ 0.0024 (70\%) & -0.0060 $\pm$ 0.0014 (90\%) & -0.0060 $\pm$ 0.0014 (90\%) \\
    paper\_quantile\_2 & 1500 & 0 & sliced\_wasserstein & 10 & +0.0010 $\pm$ 0.0016 (30\%) & -0.0054 $\pm$ 0.0010 (100\%) & -0.0054 $\pm$ 0.0010 (100\%) \\
    paper\_quantile\_2 & 1500 & 0.5 & adaptive\_mmd & 10 & +0.0001 $\pm$ 0.0005 (70\%) & -0.0105 $\pm$ 0.0011 (100\%) & -0.0105 $\pm$ 0.0011 (100\%) \\
    paper\_quantile\_2 & 1500 & 0.5 & mmd\_rff & 10 & +0.0003 $\pm$ 0.0004 (40\%) & -0.0105 $\pm$ 0.0009 (100\%) & -0.0105 $\pm$ 0.0009 (100\%) \\
    paper\_quantile\_2 & 1500 & 0.5 & sliced\_wasserstein & 10 & +0.0007 $\pm$ 0.0006 (40\%) & -0.0105 $\pm$ 0.0011 (100\%) & -0.0105 $\pm$ 0.0011 (100\%) \\
    paper\_quantile\_3 & 1500 & 0 & adaptive\_mmd & 10 & -0.0057 $\pm$ 0.0017 (90\%) & -0.0043 $\pm$ 0.0015 (80\%) & -0.0043 $\pm$ 0.0015 (80\%) \\
    paper\_quantile\_3 & 1500 & 0 & mmd\_rff & 10 & -0.0018 $\pm$ 0.0015 (80\%) & -0.0023 $\pm$ 0.0012 (80\%) & -0.0023 $\pm$ 0.0012 (80\%) \\
    paper\_quantile\_3 & 1500 & 0 & sliced\_wasserstein & 10 & +0.0015 $\pm$ 0.0015 (40\%) & -0.0021 $\pm$ 0.0009 (70\%) & -0.0021 $\pm$ 0.0009 (70\%) \\
    paper\_quantile\_3 & 1500 & 0.5 & adaptive\_mmd & 10 & +0.0007 $\pm$ 0.0005 (30\%) & -0.0034 $\pm$ 0.0008 (100\%) & -0.0034 $\pm$ 0.0008 (100\%) \\
    paper\_quantile\_3 & 1500 & 0.5 & mmd\_rff & 10 & +0.0003 $\pm$ 0.0004 (50\%) & -0.0033 $\pm$ 0.0006 (100\%) & -0.0033 $\pm$ 0.0006 (100\%) \\
    paper\_quantile\_3 & 1500 & 0.5 & sliced\_wasserstein & 10 & +0.0004 $\pm$ 0.0003 (30\%) & -0.0030 $\pm$ 0.0004 (100\%) & -0.0030 $\pm$ 0.0004 (100\%) \\
    \bottomrule
  \end{tabular}
  \end{adjustbox}
\end{table}

\paragraph{Real univariate data.}
On real univariate benchmarks CART-style splitting is again the stronger default
(Table~\ref{tab:real}).  CART wins \texttt{wine\_quality\_white} and
\texttt{kin8nm} against the MMD family, and \texttt{diabetes} ($n=300$) is too
noisy to separate the criteria.  The exception worth isolating is California
housing, where the MMD family beats CART consistently (isotropic MMD improves
CRPS by $\approx 0.003$ at a 100\% win rate, and adaptive MMD roughly doubles
that).  That case confounds two explanations: the dataset is large, and housing
prices may contain estimable heteroskedastic structure.  A subsample experiment
over $n\in\{2000,4000,8000,16000\}$ separates these readings
(Figure~\ref{fig:california}).  The paired MMD$-$CART advantage is already
present at small $n$ and is sustained as $n$ grows, which favors the
strong-distributional-structure reading over a pure estimability threshold.

\begin{table}[h]
  \centering
  \caption{Paired differences (criterion $-$ \texttt{cart}). Negative means the row improves over the reference; values are mean $\pm$ paired SE with seed-level win rate.}
  \label{tab:real}
  \begin{adjustbox}{max width=\textwidth}
  \begin{tabular}{lccccccc}
    \toprule
    dataset & $n_{\mathrm{tr}}$ & honesty & criterion & pairs & $\Delta$RMSE & $\Delta$CRPS & $\Delta$energy \\
    \midrule
    california\_housing & 3000 & 0 & adaptive\_mmd & 10 & -0.0051 $\pm$ 0.0019 (80\%) & -0.0053 $\pm$ 0.0008 (100\%) & -0.0053 $\pm$ 0.0008 (100\%) \\
    california\_housing & 3000 & 0 & mmd\_rff & 10 & -0.0032 $\pm$ 0.0006 (90\%) & -0.0026 $\pm$ 0.0003 (100\%) & -0.0026 $\pm$ 0.0003 (100\%) \\
    california\_housing & 3000 & 0 & sliced\_wasserstein & 10 & -0.0005 $\pm$ 0.0008 (70\%) & -0.0008 $\pm$ 0.0004 (80\%) & -0.0008 $\pm$ 0.0004 (80\%) \\
    california\_housing & 3000 & 0.5 & adaptive\_mmd & 10 & -0.0075 $\pm$ 0.0014 (90\%) & -0.0055 $\pm$ 0.0007 (100\%) & -0.0055 $\pm$ 0.0007 (100\%) \\
    california\_housing & 3000 & 0.5 & mmd\_rff & 10 & -0.0045 $\pm$ 0.0006 (100\%) & -0.0029 $\pm$ 0.0003 (100\%) & -0.0029 $\pm$ 0.0003 (100\%) \\
    california\_housing & 3000 & 0.5 & sliced\_wasserstein & 10 & -0.0017 $\pm$ 0.0006 (80\%) & -0.0010 $\pm$ 0.0003 (80\%) & -0.0010 $\pm$ 0.0003 (80\%) \\
    diabetes & 300 & 0 & adaptive\_mmd & 10 & +0.1814 $\pm$ 0.1581 (50\%) & +0.2474 $\pm$ 0.0867 (20\%) & +0.2474 $\pm$ 0.0867 (20\%) \\
    diabetes & 300 & 0 & mmd\_rff & 10 & +0.0565 $\pm$ 0.0947 (50\%) & +0.1077 $\pm$ 0.0572 (30\%) & +0.1077 $\pm$ 0.0572 (30\%) \\
    diabetes & 300 & 0 & sliced\_wasserstein & 10 & +0.0831 $\pm$ 0.1027 (50\%) & +0.0811 $\pm$ 0.0684 (30\%) & +0.0811 $\pm$ 0.0684 (30\%) \\
    diabetes & 300 & 0.5 & adaptive\_mmd & 10 & +0.1425 $\pm$ 0.0994 (30\%) & +0.1065 $\pm$ 0.0549 (30\%) & +0.1065 $\pm$ 0.0549 (30\%) \\
    diabetes & 300 & 0.5 & mmd\_rff & 10 & +0.0222 $\pm$ 0.0595 (50\%) & +0.0176 $\pm$ 0.0363 (50\%) & +0.0176 $\pm$ 0.0363 (50\%) \\
    diabetes & 300 & 0.5 & sliced\_wasserstein & 10 & +0.0401 $\pm$ 0.0655 (40\%) & +0.0220 $\pm$ 0.0368 (50\%) & +0.0220 $\pm$ 0.0368 (50\%) \\
    kin8nm & 3000 & 0 & adaptive\_mmd & 10 & +0.0069 $\pm$ 0.0003 (0\%) & +0.0031 $\pm$ 0.0001 (0\%) & +0.0031 $\pm$ 0.0001 (0\%) \\
    kin8nm & 3000 & 0 & mmd\_rff & 10 & +0.0014 $\pm$ 0.0002 (10\%) & +0.0005 $\pm$ 0.0001 (10\%) & +0.0005 $\pm$ 0.0001 (10\%) \\
    kin8nm & 3000 & 0 & sliced\_wasserstein & 10 & -0.0016 $\pm$ 0.0002 (100\%) & -0.0008 $\pm$ 0.0001 (100\%) & -0.0008 $\pm$ 0.0001 (100\%) \\
    kin8nm & 3000 & 0.5 & adaptive\_mmd & 10 & +0.0035 $\pm$ 0.0003 (0\%) & +0.0014 $\pm$ 0.0001 (0\%) & +0.0014 $\pm$ 0.0001 (0\%) \\
    kin8nm & 3000 & 0.5 & mmd\_rff & 10 & +0.0007 $\pm$ 0.0001 (0\%) & +0.0001 $\pm$ 0.0001 (40\%) & +0.0001 $\pm$ 0.0001 (40\%) \\
    kin8nm & 3000 & 0.5 & sliced\_wasserstein & 10 & -0.0006 $\pm$ 0.0001 (100\%) & -0.0004 $\pm$ 0.0000 (100\%) & -0.0004 $\pm$ 0.0000 (100\%) \\
    wine\_quality\_white & 3000 & 0 & adaptive\_mmd & 10 & +0.0191 $\pm$ 0.0015 (0\%) & +0.0075 $\pm$ 0.0009 (0\%) & +0.0075 $\pm$ 0.0009 (0\%) \\
    wine\_quality\_white & 3000 & 0 & mmd\_rff & 10 & +0.0068 $\pm$ 0.0009 (0\%) & +0.0021 $\pm$ 0.0005 (10\%) & +0.0021 $\pm$ 0.0005 (10\%) \\
    wine\_quality\_white & 3000 & 0 & sliced\_wasserstein & 10 & +0.0072 $\pm$ 0.0010 (0\%) & +0.0024 $\pm$ 0.0005 (10\%) & +0.0024 $\pm$ 0.0005 (10\%) \\
    wine\_quality\_white & 3000 & 0.5 & adaptive\_mmd & 10 & +0.0073 $\pm$ 0.0009 (0\%) & +0.0035 $\pm$ 0.0005 (0\%) & +0.0035 $\pm$ 0.0005 (0\%) \\
    wine\_quality\_white & 3000 & 0.5 & mmd\_rff & 10 & +0.0026 $\pm$ 0.0005 (0\%) & +0.0011 $\pm$ 0.0002 (10\%) & +0.0011 $\pm$ 0.0002 (10\%) \\
    wine\_quality\_white & 3000 & 0.5 & sliced\_wasserstein & 10 & +0.0043 $\pm$ 0.0006 (0\%) & +0.0023 $\pm$ 0.0003 (0\%) & +0.0023 $\pm$ 0.0003 (0\%) \\
    \bottomrule
  \end{tabular}
  \end{adjustbox}
\end{table}

\begin{figure}[h]
  \centering
  \includegraphics[width=0.62\textwidth]{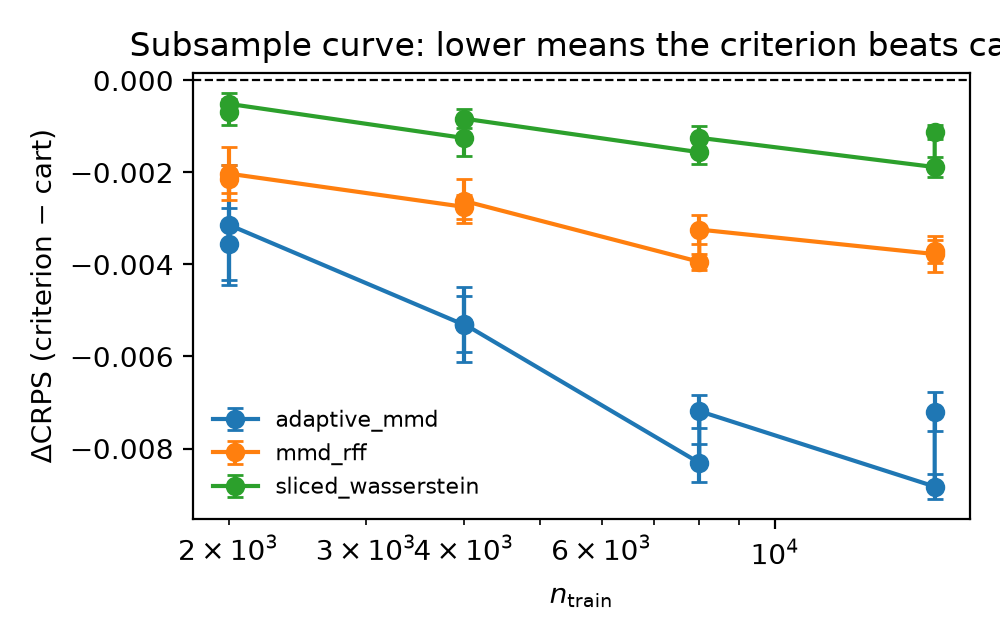}
  \caption{California housing subsample curve: paired CRPS difference (criterion
  $-$ CART) versus $n_{\mathrm{train}}$.  Points below zero mean the criterion
  beats CART.  The MMD family is below CART across the whole range rather than
  only at large $n$.}
  \label{fig:california}
\end{figure}

\paragraph{Do the MMD refinements help?}
Table~\ref{tab:modification} references plain isotropic MMD rather than CART and
asks whether any refinement is a better distributional baseline.  None is.
Adaptive frequency selection helps on California but hurts on \texttt{kin8nm} and
\texttt{wine}; sliced Wasserstein helps on \texttt{kin8nm} but loses elsewhere;
on the multivariate cells the refinements scatter around zero relative to
\texttt{mmd\_rff} with no consistent sign.  This is the evidence that ordinary
RFF-MMD is already the right distributional default: the extra machinery does not
move the frontier.  Adaptive-frequency MMD in particular was included as a rescue
attempt---if uniform RFF averaging were hiding useful signal, per-split selection
should expose it---and it does not change the conclusion, which points away from
a simple tuning failure and toward the finite-sample allocation explanation.

\begin{table}[h]
  \centering
  \caption{Paired differences (criterion $-$ \texttt{mmd\_rff}). Negative means the row improves over the reference; values are mean $\pm$ paired SE with seed-level win rate.}
  \label{tab:modification}
  \begin{adjustbox}{max width=\textwidth}
  \begin{tabular}{lccccccc}
    \toprule
    dataset & $n_{\mathrm{tr}}$ & honesty & criterion & pairs & $\Delta$RMSE & $\Delta$CRPS & $\Delta$energy \\
    \midrule
    california\_housing & 3000 & 0 & adaptive\_mmd & 10 & -0.0020 $\pm$ 0.0015 (60\%) & -0.0028 $\pm$ 0.0007 (100\%) & -0.0028 $\pm$ 0.0007 (100\%) \\
    california\_housing & 3000 & 0 & cart & 10 & +0.0032 $\pm$ 0.0006 (10\%) & +0.0026 $\pm$ 0.0003 (0\%) & +0.0026 $\pm$ 0.0003 (0\%) \\
    california\_housing & 3000 & 0 & sliced\_wasserstein & 10 & +0.0026 $\pm$ 0.0011 (30\%) & +0.0017 $\pm$ 0.0005 (10\%) & +0.0017 $\pm$ 0.0005 (10\%) \\
    california\_housing & 3000 & 0.5 & adaptive\_mmd & 10 & -0.0031 $\pm$ 0.0011 (80\%) & -0.0026 $\pm$ 0.0006 (90\%) & -0.0026 $\pm$ 0.0006 (90\%) \\
    california\_housing & 3000 & 0.5 & cart & 10 & +0.0045 $\pm$ 0.0006 (0\%) & +0.0029 $\pm$ 0.0003 (0\%) & +0.0029 $\pm$ 0.0003 (0\%) \\
    california\_housing & 3000 & 0.5 & sliced\_wasserstein & 10 & +0.0027 $\pm$ 0.0010 (30\%) & +0.0019 $\pm$ 0.0005 (20\%) & +0.0019 $\pm$ 0.0005 (20\%) \\
    diabetes & 300 & 0 & adaptive\_mmd & 10 & +0.1249 $\pm$ 0.1012 (40\%) & +0.1397 $\pm$ 0.0634 (30\%) & +0.1397 $\pm$ 0.0634 (30\%) \\
    diabetes & 300 & 0 & cart & 10 & -0.0565 $\pm$ 0.0947 (50\%) & -0.1077 $\pm$ 0.0572 (70\%) & -0.1077 $\pm$ 0.0572 (70\%) \\
    diabetes & 300 & 0 & sliced\_wasserstein & 10 & +0.0266 $\pm$ 0.1016 (70\%) & -0.0266 $\pm$ 0.0641 (70\%) & -0.0266 $\pm$ 0.0641 (70\%) \\
    diabetes & 300 & 0.5 & adaptive\_mmd & 10 & +0.1203 $\pm$ 0.0578 (30\%) & +0.0890 $\pm$ 0.0323 (40\%) & +0.0890 $\pm$ 0.0323 (40\%) \\
    diabetes & 300 & 0.5 & cart & 10 & -0.0222 $\pm$ 0.0595 (50\%) & -0.0176 $\pm$ 0.0363 (50\%) & -0.0176 $\pm$ 0.0363 (50\%) \\
    diabetes & 300 & 0.5 & sliced\_wasserstein & 10 & +0.0178 $\pm$ 0.0316 (40\%) & +0.0045 $\pm$ 0.0233 (50\%) & +0.0045 $\pm$ 0.0233 (50\%) \\
    enb & 576 & 0 & adaptive\_mmd & 5 & -0.0034 $\pm$ 0.0018 (80\%) & -0.0017 $\pm$ 0.0010 (80\%) & -0.0023 $\pm$ 0.0014 (80\%) \\
    enb & 576 & 0 & anisotropic\_mmd & 5 & -0.0021 $\pm$ 0.0025 (60\%) & -0.0010 $\pm$ 0.0010 (40\%) & -0.0013 $\pm$ 0.0015 (60\%) \\
    enb & 576 & 0 & cart & 5 & +0.0027 $\pm$ 0.0027 (40\%) & +0.0017 $\pm$ 0.0011 (20\%) & +0.0024 $\pm$ 0.0017 (20\%) \\
    enb & 576 & 0 & sliced\_wasserstein & 5 & +0.0090 $\pm$ 0.0032 (0\%) & +0.0029 $\pm$ 0.0013 (20\%) & +0.0040 $\pm$ 0.0018 (20\%) \\
    enb & 576 & 0.5 & adaptive\_mmd & 5 & -0.0064 $\pm$ 0.0044 (60\%) & -0.0025 $\pm$ 0.0016 (80\%) & -0.0036 $\pm$ 0.0024 (100\%) \\
    enb & 576 & 0.5 & anisotropic\_mmd & 5 & -0.0007 $\pm$ 0.0016 (60\%) & -0.0001 $\pm$ 0.0004 (40\%) & -0.0004 $\pm$ 0.0006 (60\%) \\
    enb & 576 & 0.5 & cart & 5 & +0.0071 $\pm$ 0.0037 (20\%) & +0.0027 $\pm$ 0.0013 (0\%) & +0.0036 $\pm$ 0.0017 (0\%) \\
    enb & 576 & 0.5 & sliced\_wasserstein & 5 & +0.0089 $\pm$ 0.0031 (0\%) & +0.0047 $\pm$ 0.0010 (0\%) & +0.0058 $\pm$ 0.0013 (0\%) \\
    kin8nm & 3000 & 0 & adaptive\_mmd & 10 & +0.0056 $\pm$ 0.0002 (0\%) & +0.0026 $\pm$ 0.0001 (0\%) & +0.0026 $\pm$ 0.0001 (0\%) \\
    kin8nm & 3000 & 0 & cart & 10 & -0.0014 $\pm$ 0.0002 (90\%) & -0.0005 $\pm$ 0.0001 (90\%) & -0.0005 $\pm$ 0.0001 (90\%) \\
    kin8nm & 3000 & 0 & sliced\_wasserstein & 10 & -0.0030 $\pm$ 0.0003 (100\%) & -0.0013 $\pm$ 0.0001 (100\%) & -0.0013 $\pm$ 0.0001 (100\%) \\
    kin8nm & 3000 & 0.5 & adaptive\_mmd & 10 & +0.0028 $\pm$ 0.0002 (0\%) & +0.0013 $\pm$ 0.0001 (0\%) & +0.0013 $\pm$ 0.0001 (0\%) \\
    kin8nm & 3000 & 0.5 & cart & 10 & -0.0007 $\pm$ 0.0001 (100\%) & -0.0001 $\pm$ 0.0001 (60\%) & -0.0001 $\pm$ 0.0001 (60\%) \\
    kin8nm & 3000 & 0.5 & sliced\_wasserstein & 10 & -0.0013 $\pm$ 0.0001 (100\%) & -0.0005 $\pm$ 0.0001 (100\%) & -0.0005 $\pm$ 0.0001 (100\%) \\
    paper\_copula & 3750 & 0 & adaptive\_mmd & 5 & -0.0007 $\pm$ 0.0003 (80\%) & -0.0003 $\pm$ 0.0002 (60\%) & -0.0010 $\pm$ 0.0005 (100\%) \\
    paper\_copula & 3750 & 0 & anisotropic\_mmd & 5 & -0.0008 $\pm$ 0.0006 (60\%) & +0.0002 $\pm$ 0.0004 (40\%) & -0.0005 $\pm$ 0.0008 (60\%) \\
    paper\_copula & 3750 & 0 & cart & 5 & +0.0009 $\pm$ 0.0005 (40\%) & +0.0004 $\pm$ 0.0003 (40\%) & +0.0045 $\pm$ 0.0007 (0\%) \\
    paper\_copula & 3750 & 0 & sliced\_wasserstein & 5 & +0.0003 $\pm$ 0.0007 (40\%) & +0.0002 $\pm$ 0.0004 (60\%) & -0.0002 $\pm$ 0.0009 (60\%) \\
    paper\_copula & 3750 & 0.5 & adaptive\_mmd & 5 & -0.0001 $\pm$ 0.0002 (40\%) & -0.0000 $\pm$ 0.0001 (40\%) & -0.0002 $\pm$ 0.0003 (60\%) \\
    paper\_copula & 3750 & 0.5 & anisotropic\_mmd & 5 & +0.0003 $\pm$ 0.0003 (20\%) & +0.0003 $\pm$ 0.0001 (20\%) & -0.0005 $\pm$ 0.0003 (80\%) \\
    paper\_copula & 3750 & 0.5 & cart & 5 & -0.0002 $\pm$ 0.0003 (60\%) & -0.0001 $\pm$ 0.0001 (60\%) & +0.0038 $\pm$ 0.0003 (0\%) \\
    paper\_copula & 3750 & 0.5 & sliced\_wasserstein & 5 & -0.0002 $\pm$ 0.0003 (60\%) & -0.0000 $\pm$ 0.0002 (60\%) & -0.0008 $\pm$ 0.0004 (80\%) \\
    paper\_heterogeneous\_regression & 3750 & 0 & adaptive\_mmd & 5 & -0.0008 $\pm$ 0.0010 (80\%) & +0.0001 $\pm$ 0.0008 (60\%) & +0.0001 $\pm$ 0.0011 (60\%) \\
    paper\_heterogeneous\_regression & 3750 & 0 & anisotropic\_mmd & 5 & -0.0005 $\pm$ 0.0007 (60\%) & +0.0005 $\pm$ 0.0005 (40\%) & +0.0008 $\pm$ 0.0007 (20\%) \\
    paper\_heterogeneous\_regression & 3750 & 0 & cart & 5 & -0.0012 $\pm$ 0.0018 (60\%) & +0.0006 $\pm$ 0.0008 (20\%) & +0.0010 $\pm$ 0.0012 (20\%) \\
    paper\_heterogeneous\_regression & 3750 & 0 & sliced\_wasserstein & 5 & -0.0012 $\pm$ 0.0007 (80\%) & -0.0005 $\pm$ 0.0003 (80\%) & -0.0008 $\pm$ 0.0005 (80\%) \\
    paper\_heterogeneous\_regression & 3750 & 0.5 & adaptive\_mmd & 5 & +0.0020 $\pm$ 0.0012 (20\%) & +0.0004 $\pm$ 0.0006 (20\%) & +0.0007 $\pm$ 0.0010 (20\%) \\
    paper\_heterogeneous\_regression & 3750 & 0.5 & anisotropic\_mmd & 5 & +0.0007 $\pm$ 0.0008 (40\%) & +0.0000 $\pm$ 0.0003 (40\%) & +0.0002 $\pm$ 0.0006 (40\%) \\
    paper\_heterogeneous\_regression & 3750 & 0.5 & cart & 5 & +0.0023 $\pm$ 0.0010 (0\%) & +0.0042 $\pm$ 0.0004 (0\%) & +0.0069 $\pm$ 0.0008 (0\%) \\
    paper\_heterogeneous\_regression & 3750 & 0.5 & sliced\_wasserstein & 5 & +0.0013 $\pm$ 0.0005 (20\%) & +0.0012 $\pm$ 0.0002 (0\%) & +0.0019 $\pm$ 0.0003 (0\%) \\
    wine\_quality\_white & 3000 & 0 & adaptive\_mmd & 10 & +0.0123 $\pm$ 0.0006 (0\%) & +0.0054 $\pm$ 0.0004 (0\%) & +0.0054 $\pm$ 0.0004 (0\%) \\
    wine\_quality\_white & 3000 & 0 & cart & 10 & -0.0068 $\pm$ 0.0009 (100\%) & -0.0021 $\pm$ 0.0005 (90\%) & -0.0021 $\pm$ 0.0005 (90\%) \\
    wine\_quality\_white & 3000 & 0 & sliced\_wasserstein & 10 & +0.0004 $\pm$ 0.0013 (50\%) & +0.0003 $\pm$ 0.0007 (60\%) & +0.0003 $\pm$ 0.0007 (60\%) \\
    wine\_quality\_white & 3000 & 0.5 & adaptive\_mmd & 10 & +0.0047 $\pm$ 0.0005 (0\%) & +0.0024 $\pm$ 0.0003 (0\%) & +0.0024 $\pm$ 0.0003 (0\%) \\
    wine\_quality\_white & 3000 & 0.5 & cart & 10 & -0.0026 $\pm$ 0.0005 (100\%) & -0.0011 $\pm$ 0.0002 (90\%) & -0.0011 $\pm$ 0.0002 (90\%) \\
    wine\_quality\_white & 3000 & 0.5 & sliced\_wasserstein & 10 & +0.0017 $\pm$ 0.0006 (20\%) & +0.0012 $\pm$ 0.0004 (20\%) & +0.0012 $\pm$ 0.0004 (20\%) \\
    \bottomrule
  \end{tabular}
  \end{adjustbox}
\end{table}

\paragraph{Multivariate responses.}
The regime that most naturally motivates distributional forests is multivariate
response, and here distributional splitting clearly earns its keep
(Table~\ref{tab:multivariate}).  On the building-energy real benchmark
(\texttt{enb}, $d=2$) and on two synthetic multivariate mechanisms, the MMD
family beats CART on the energy score with high win rates.  The cleanest case is
\texttt{paper\_copula}, a pure-dependence shift with little marginal mean signal:
every distributional criterion improves the energy score over CART by a wide
margin at a 100\% win rate, while CRPS and RMSE---which see the marginals
only---barely move.  This is exactly the structure CART cannot see and a
distributional criterion can.

\begin{table}[h]
  \centering
  \caption{Paired differences (criterion $-$ \texttt{cart}). Negative means the row improves over the reference; values are mean $\pm$ paired SE with seed-level win rate.}
  \label{tab:multivariate}
  \begin{adjustbox}{max width=\textwidth}
  \begin{tabular}{lccccccc}
    \toprule
    dataset & $n_{\mathrm{tr}}$ & honesty & criterion & pairs & $\Delta$energy & $\Delta$CRPS & $\Delta$RMSE \\
    \midrule
    enb & 576 & 0 & adaptive\_mmd & 5 & -0.0047 $\pm$ 0.0026 (100\%) & -0.0033 $\pm$ 0.0018 (100\%) & -0.0062 $\pm$ 0.0038 (80\%) \\
    enb & 576 & 0 & anisotropic\_mmd & 5 & -0.0037 $\pm$ 0.0024 (80\%) & -0.0026 $\pm$ 0.0017 (80\%) & -0.0048 $\pm$ 0.0041 (80\%) \\
    enb & 576 & 0 & mmd\_rff & 5 & -0.0024 $\pm$ 0.0017 (80\%) & -0.0017 $\pm$ 0.0011 (80\%) & -0.0027 $\pm$ 0.0027 (60\%) \\
    enb & 576 & 0 & sliced\_wasserstein & 5 & +0.0015 $\pm$ 0.0022 (40\%) & +0.0012 $\pm$ 0.0016 (40\%) & +0.0063 $\pm$ 0.0040 (20\%) \\
    enb & 576 & 0.5 & adaptive\_mmd & 5 & -0.0072 $\pm$ 0.0031 (100\%) & -0.0052 $\pm$ 0.0022 (100\%) & -0.0135 $\pm$ 0.0067 (100\%) \\
    enb & 576 & 0.5 & anisotropic\_mmd & 5 & -0.0041 $\pm$ 0.0019 (80\%) & -0.0028 $\pm$ 0.0014 (80\%) & -0.0078 $\pm$ 0.0047 (80\%) \\
    enb & 576 & 0.5 & mmd\_rff & 5 & -0.0036 $\pm$ 0.0017 (100\%) & -0.0027 $\pm$ 0.0013 (100\%) & -0.0071 $\pm$ 0.0037 (80\%) \\
    enb & 576 & 0.5 & sliced\_wasserstein & 5 & +0.0022 $\pm$ 0.0027 (40\%) & +0.0020 $\pm$ 0.0020 (20\%) & +0.0018 $\pm$ 0.0058 (60\%) \\
    paper\_copula & 3750 & 0 & adaptive\_mmd & 5 & -0.0054 $\pm$ 0.0010 (100\%) & -0.0007 $\pm$ 0.0004 (80\%) & -0.0016 $\pm$ 0.0008 (100\%) \\
    paper\_copula & 3750 & 0 & anisotropic\_mmd & 5 & -0.0050 $\pm$ 0.0014 (100\%) & -0.0002 $\pm$ 0.0006 (60\%) & -0.0018 $\pm$ 0.0011 (60\%) \\
    paper\_copula & 3750 & 0 & mmd\_rff & 5 & -0.0045 $\pm$ 0.0007 (100\%) & -0.0004 $\pm$ 0.0003 (60\%) & -0.0009 $\pm$ 0.0005 (60\%) \\
    paper\_copula & 3750 & 0 & sliced\_wasserstein & 5 & -0.0047 $\pm$ 0.0010 (100\%) & -0.0002 $\pm$ 0.0004 (80\%) & -0.0006 $\pm$ 0.0006 (80\%) \\
    paper\_copula & 3750 & 0.5 & adaptive\_mmd & 5 & -0.0040 $\pm$ 0.0004 (100\%) & +0.0001 $\pm$ 0.0001 (20\%) & +0.0001 $\pm$ 0.0003 (60\%) \\
    paper\_copula & 3750 & 0.5 & anisotropic\_mmd & 5 & -0.0043 $\pm$ 0.0002 (100\%) & +0.0004 $\pm$ 0.0002 (20\%) & +0.0004 $\pm$ 0.0003 (20\%) \\
    paper\_copula & 3750 & 0.5 & mmd\_rff & 5 & -0.0038 $\pm$ 0.0003 (100\%) & +0.0001 $\pm$ 0.0001 (40\%) & +0.0002 $\pm$ 0.0003 (40\%) \\
    paper\_copula & 3750 & 0.5 & sliced\_wasserstein & 5 & -0.0046 $\pm$ 0.0002 (100\%) & +0.0001 $\pm$ 0.0001 (20\%) & +0.0000 $\pm$ 0.0001 (60\%) \\
    paper\_heterogeneous\_regression & 3750 & 0 & adaptive\_mmd & 5 & -0.0009 $\pm$ 0.0012 (80\%) & -0.0006 $\pm$ 0.0008 (80\%) & +0.0004 $\pm$ 0.0022 (40\%) \\
    paper\_heterogeneous\_regression & 3750 & 0 & anisotropic\_mmd & 5 & -0.0002 $\pm$ 0.0010 (40\%) & -0.0001 $\pm$ 0.0006 (40\%) & +0.0008 $\pm$ 0.0020 (40\%) \\
    paper\_heterogeneous\_regression & 3750 & 0 & mmd\_rff & 5 & -0.0010 $\pm$ 0.0012 (80\%) & -0.0006 $\pm$ 0.0008 (80\%) & +0.0012 $\pm$ 0.0018 (40\%) \\
    paper\_heterogeneous\_regression & 3750 & 0 & sliced\_wasserstein & 5 & -0.0017 $\pm$ 0.0008 (80\%) & -0.0011 $\pm$ 0.0005 (80\%) & -0.0000 $\pm$ 0.0015 (40\%) \\
    paper\_heterogeneous\_regression & 3750 & 0.5 & adaptive\_mmd & 5 & -0.0062 $\pm$ 0.0012 (100\%) & -0.0039 $\pm$ 0.0007 (100\%) & -0.0003 $\pm$ 0.0018 (20\%) \\
    paper\_heterogeneous\_regression & 3750 & 0.5 & anisotropic\_mmd & 5 & -0.0067 $\pm$ 0.0009 (100\%) & -0.0042 $\pm$ 0.0005 (100\%) & -0.0016 $\pm$ 0.0014 (80\%) \\
    paper\_heterogeneous\_regression & 3750 & 0.5 & mmd\_rff & 5 & -0.0069 $\pm$ 0.0008 (100\%) & -0.0042 $\pm$ 0.0004 (100\%) & -0.0023 $\pm$ 0.0010 (100\%) \\
    paper\_heterogeneous\_regression & 3750 & 0.5 & sliced\_wasserstein & 5 & -0.0050 $\pm$ 0.0007 (100\%) & -0.0031 $\pm$ 0.0004 (100\%) & -0.0010 $\pm$ 0.0007 (80\%) \\
    \bottomrule
  \end{tabular}
  \end{adjustbox}
\end{table}

\section{Characterization}

The empirical pattern can be summarized as follows.

\begin{quote}
Distributional splitting helps when non-location conditional structure is present
and estimable at node scale.  For scalar responses in mean-dominated finite
samples, CART-style splitting is the better default; among distributional
criteria, ordinary isotropic MMD is already close to best in class, and the
anisotropic, adaptive, and sliced-Wasserstein refinements do not systematically
improve on it.  For multivariate responses, where dependence structure is
invisible to mean splitting, distributional criteria are the right tool.
\end{quote}

This statement is intentionally scoped.  It is not a claim that distributional
forests are inferior to quantile regression forests, nor that MMD is a poor
criterion in general.  The core motivation for DRF is multivariate response
distributions, where CART/QRF-style univariate orderings are not a complete
solution, and the multivariate evidence above supports that motivation directly.
The negative result is about the common univariate tabular setting: when the
target is scalar and the conditional law is mostly a shifted version of itself, a
distributional split criterion is often paying variance to chase weak structure.

\section{Implications}

For practice, the recommendation is simple.  Use CART-style splitting as the
univariate default, then reach for a distributional criterion---ordinary
isotropic MMD, without further refinement---when the response is multivariate or
the scalar data show clear heteroskedasticity, tail behavior, or non-location
structure.  When distributional splitting is used, paired-seed comparisons and
honesty settings should be reported: in these experiments, honesty can be as
important as the criterion itself.  All of the above is implemented in the
open-source \texttt{drforest} library, which provides the honest forest, every
criterion compared here, and the post-hoc shrinkage variants behind one
interface; its Rust-backed split search ran the full paired sweep in this note
at a cost that makes such characterizations routine rather than a project in
themselves.

For research, the next useful theory is not a generic dominance theorem.  The
useful object is a finite-sample comparison of split-score signal-to-noise: mean
separation versus kernel mean separation under local alternatives.
Appendix~\ref{app:finite} carries this out for a single fixed split and shows
that the location-shift mechanism is visible to both criteria, whereas the
mean-preserving scale, shape, and dependence mechanisms have zero population
mean contrast but positive Gaussian-kernel contrast.  This matches the
qualitative visibility pattern in the experiments.  What remains genuinely hard
is lifting this fixed-split calculation to the adaptive maximization over
correlated cutpoints, the
per-node random-feature resampling, and the forest aggregation that ultimately
determines CRPS or energy risk; that is where a sharp threshold statement, as
opposed to a mechanism, would have to come from.

\section{Limitations}

The scalar experiments compare CART-style splitting to MMD splitting inside the
same forest implementation; a full external QRF package comparison is a separate
baseline.  The multivariate evidence, while decisive on the energy score, uses a
small number of mechanisms and one real benchmark, and a broader multivariate
suite would strengthen the positive side of the characterization.

\bibliographystyle{plainnat}
\bibliography{main}

@article{meinshausen2006,
  author = {Meinshausen, Nicolai},
  title = {Quantile Regression Forests},
  journal = {Journal of Machine Learning Research},
  year = {2006},
  volume = {7},
  pages = {983--999}
}

@article{gretton2012,
  author = {Gretton, Arthur and Borgwardt, Karsten M. and Rasch, Malte J. and
            Sch{\"o}lkopf, Bernhard and Smola, Alexander},
  title = {A Kernel Two-Sample Test},
  journal = {Journal of Machine Learning Research},
  year = {2012},
  volume = {13},
  pages = {723--773}
}

@inproceedings{rahimi2007,
  author = {Rahimi, Ali and Recht, Benjamin},
  title = {Random Features for Large-Scale Kernel Machines},
  booktitle = {Advances in Neural Information Processing Systems},
  year = {2007},
  volume = {20}
}

@article{choi2024,
  author = {Choi, Ikjun and Kim, Ilmun},
  title = {Computational-Statistical Trade-off in Kernel Two-Sample Testing
           with Random Fourier Features},
  journal = {arXiv preprint arXiv:2407.08976},
  year = {2024}
}

@article{cevid2022,
  author = {{\v C}evid, Domagoj and Michel, Loris and N{\"a}f, Jeffrey and
            B{\"u}hlmann, Peter and Meinshausen, Nicolai},
  title = {Distributional Random Forests: Heterogeneity Adjustment and
           Multivariate Distributional Regression},
  journal = {Journal of Machine Learning Research},
  year = {2022},
  volume = {23},
  number = {333},
  pages = {1--79}
}

\appendix
\section{Finite-Node Analysis of Split Scores}
\label{app:finite}

This appendix collects the tractable theory behind the signal-allocation
characterization.  Throughout we fix a single node and a single candidate split
$L \mid R$, with $n$ observations, child sizes $n_L,n_R$, and balance
$q = n_L/n$.  Conditional on the child memberships we treat the within-child
responses as i.i.d.\ draws from the two child laws; this is a fixed-split
statement, not a statement about the cutpoint actually selected by maximizing the
score, and the gap between the two is part of what makes a sharp threshold
result hard (Remark~\ref{rem:hard}).

For the RFF criterion, all statements below condition on the sampled frequencies
and on the forest-level bandwidth.  We identify the complex feature vector in
$\mathbb C^B$ with its real representation in $\mathbb R^{2B}$ when invoking
covariances and Gaussian limits.

Let $g(\cdot)$ denote the response embedding scored by the criterion: $g(Y)=Y$
for CART, and $g(Y)=B^{-1/2}\psi_B(Y)$ for the normalized $B$-feature RFF map, so
that $\|g\|$ is bounded and the MMD score below is the implemented one.  Write
$\mu_{g,L},\mu_{g,R}$ for the child means of $g(Y)$, $C_L,C_R$ for its child
covariance operators, and
\[
  \delta_g = \mu_{g,L}-\mu_{g,R},
  \qquad
  G_g = q(1-q)\,\|\delta_g\|^2 .
\]
Both criteria score the same biased squared mean-embedding statistic, differing
only through $g$:
\[
  \widehat G_g = q(1-q)\,\bigl\|\bar g_L - \bar g_R\bigr\|^2,
  \qquad
  \bar g_L = \tfrac{1}{n_L}\!\sum_{i\in L} g(Y_i),\quad
  \bar g_R = \tfrac{1}{n_R}\!\sum_{i\in R} g(Y_i).
\]

\subsection{Exact finite-sample expectation}

\begin{proposition}\label{prop:exp}
Conditional on the child sizes,
\[
  \mathbb E\bigl[\widehat G_g\bigr]
  = G_g
  + \frac{(1-q)\operatorname{tr}(C_L) + q\operatorname{tr}(C_R)}{n} .
\]
\end{proposition}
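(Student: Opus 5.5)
The plan is the standard bias--variance decomposition of a squared norm of a difference of sample means, carried out in the real Hilbert space where $g$ takes values. For RFF this is $\mathbb R^{2B}$ via the identification fixed above, with the complex modulus squared equal to the real Euclidean norm squared. Conditional on the child memberships, and on the sampled frequencies and bandwidth for RFF, we write $D = \bar g_L - \bar g_R$. Then $\mathbb E[D] = \mu_{g,L}-\mu_{g,R} = \delta_g$, because within-child draws are i.i.d.\ from the child laws. We also have $\|g\|$ bounded for RFF, and we assume finite second moments for CART, so that all traces are finite.

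The first step is the identity $\mathbb E\|D\|^2 = \|\mathbb E D\|^2 + \mathbb E\|D-\mathbb E D\|^2$. This holds because the cross term $2\langle \delta_g, \mathbb E[D-\delta_g]\rangle$ vanishes. Next, we write $\mathbb E\|D-\delta_g\|^2 = \operatorname{tr}\operatorname{Cov}(D)$. The two children are disjoint sets of independent observations, so $\operatorname{Cov}(D) = \operatorname{Cov}(\bar g_L) + \operatorname{Cov}(\bar g_R) = C_L/n_L + C_R/n_R$. The within-child covariance of an average of $n_L$ i.i.d.\ vectors is $C_L/n_L$, and the cross-covariances vanish by independence. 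This gives
\[
  \mathbb E\|D\|^2 = \|\delta_g\|^2 + \frac{\operatorname{tr}(C_L)}{n_L} + \frac{\operatorname{tr}(C_R)}{n_R}.
\]

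Finally, we multiply by the deterministic factor $q(1-q)$ and substitute $n_L = qn$ and $n_R=(1-q)n$. The result is $q(1-q)\operatorname{tr}(C_L)/(qn) = (1-q)\operatorname{tr}(C_L)/n$ and $q(1-q)\operatorname{tr}(C_R)/((1-q)n) = q\operatorname{tr}(C_R)/n$. Together with $q(1-q)\|\delta_g\|^2 = G_g$, this is exactly the stated formula.

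No step is really an obstacle; the only points needing care are bookkeeping. The first is that for complex features the covariance and trace must be taken for the real representation, so that $\operatorname{tr}(C)=\mathbb E\|g-\mu_g\|^2$ with $\|\cdot\|$ the complex modulus. The second is that conditioning on the random frequencies makes $g$ a fixed measurable map, so the i.i.d.\ calculation applies verbatim.
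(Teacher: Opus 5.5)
Your proposal is correct and follows the paper's proof essentially line for line: you use the mean $\delta_g$ and covariance $C_L/n_L + C_R/n_R$ of the increment to get $\mathbb E\|\bar g_L-\bar g_R\|^2 = \|\delta_g\|^2 + \operatorname{tr}(C_L)/n_L + \operatorname{tr}(C_R)/n_R$, then multiply by $q(1-q)$ with $n_L = qn$ and $n_R = (1-q)n$. Your extra remarks on the real representation of the complex features, on conditioning on the sampled frequencies, and on independence across the two children (which the paper leaves implicit) are sound bookkeeping, not a different route.
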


\begin{proof}
The increment $\bar g_L-\bar g_R$ has mean $\delta_g$ and covariance
$C_L/n_L + C_R/n_R$, so
$\mathbb E\|\bar g_L-\bar g_R\|^2 = \|\delta_g\|^2 + \operatorname{tr}(C_L)/n_L +
\operatorname{tr}(C_R)/n_R$.  Multiplying by $q(1-q)$ and using
$q(1-q)/n_L=(1-q)/n$ and $q(1-q)/n_R=q/n$ gives the claim.
\end{proof}

Three consequences are worth stating. The population signal is $\|\delta_g\|^2$;
the score carries an $O(n^{-1})$ noise floor; and when the two children share a
law ($C_L=C_R=C$) that floor is $\operatorname{tr}(C)/n$ regardless of split
balance.  CART and MMD obey the same decomposition---only the embedding $g$, and
hence $\operatorname{tr}(C_\bullet)$, changes.  This decomposition does not by
itself explain the honesty comparison: honesty reduces the structure-sample
size used to score splits while separating structure selection from leaf
estimation.

\subsection{Local quadratic-form limit and signal-to-noise}

Assume finite second moments, $q_n\to q\in(0,1)$, and convergence of the child
covariance operators.  Let $W_g = C_L/q + C_R/(1-q)$, the limiting $n$-scaled
covariance of the increment.  A multivariate central limit theorem gives
\[
  \sqrt n\,\bigl((\bar g_L-\bar g_R) - \delta_g\bigr)
  \;\Rightarrow\; \mathcal N(0, W_g).
\]
Under the local alternative $\delta_{g,n}=h_g/\sqrt n$,
\[
  n\,\widehat G_g \;\Rightarrow\; q(1-q)\,\|Z_g + h_g\|^2,
  \qquad Z_g \sim \mathcal N(0,W_g) .
\]
Setting $h_g=0$ recovers the null: for CART ($g(Y)=Y$, scalar) this is a scaled
$\chi^2_1$, whereas for the finite-$B$ RFF embedding it is a weighted central
quadratic form in Gaussians.  Under $h_g\ne0$ the corresponding local
limit is generally noncentral.  Under the Gaussian approximation a natural
fixed-split signal-to-noise
ratio follows from $\mathbb E\|Z\|^2=\|\delta\|^2+\operatorname{tr}\Sigma$ and
$\operatorname{Var}\|Z\|^2 = 2\operatorname{tr}(\Sigma^2)+4\delta^\top\Sigma\delta$
with $\Sigma=W_g/n$:
\[
  \operatorname{SNR}_g
  \;\approx\;
  \frac{\|\delta_g\|^2}
       {\sqrt{\,4\langle \delta_g, W_g\,\delta_g\rangle/n
              + 2\|W_g\|_{\mathrm{HS}}^2/n^2\,}} .
\]
This formalizes signal allocation correctly: power depends on the embedding
contrast $\|\delta_g\|^2$ relative to the covariance geometry $W_g$ of the
embedded response, not on a bare count of feature coordinates.  A normalized
bounded kernel does not incur an automatic dimension penalty.

\subsection{Two-regime visibility and the synthetic mechanisms}

\begin{proposition}\label{prop:visibility}
Suppose that, within the node, each side is a mixture of the same two fixed laws
$P_0,P_1$ with regime proportions $\alpha_L,\alpha_R$.  Writing
$m_{g,i}=\mathbb E_{P_i}[g]$,
\[
  \delta_g = (\alpha_L-\alpha_R)\,(m_{g,1}-m_{g,0}),
  \qquad
  G_g = q(1-q)\,(\alpha_L-\alpha_R)^2\,\|m_{g,1}-m_{g,0}\|^2 .
\]
\end{proposition}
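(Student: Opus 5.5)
The claim follows from linearity of expectation under a mixture, so the plan is a direct computation; nothing from Proposition~\ref{prop:exp} or the local limit is needed. Within the node, the child law on side $s\in\{L,R\}$ is the mixture $P^{(s)} = (1-\alpha_s)P_0 + \alpha_s P_1$. The child mean of the embedding is the expectation of $g$ under this mixture. Since expectation is linear in the mixing measure, we get $\mu_{g,s} = (1-\alpha_s)m_{g,0} + \alpha_s m_{g,1}$. This requires $g$ to be integrable under $P_0$ and $P_1$. That holds automatically for the bounded normalized RFF embedding. For CART ($g(Y)=Y$) it is exactly the finite-first-moment condition already implicit in the setup; for the RFF case we continue to condition on the sampled frequencies and bandwidth, as stated at the start of the appendix.

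Next, subtract the two child means. Writing $\mu_{g,s} = m_{g,0} + \alpha_s(m_{g,1}-m_{g,0})$, the common $m_{g,0}$ cancels and leaves $\delta_g = (\alpha_L-\alpha_R)(m_{g,1}-m_{g,0})$. The second identity then follows by substituting into the definition $G_g = q(1-q)\|\delta_g\|^2$ and pulling out the real scalar $(\alpha_L-\alpha_R)^2$. For the complex RFF embedding this means using the Hermitian norm, or equivalently the real $\mathbb R^{2B}$ representation fixed earlier; in either case $\|c v\|^2 = c^2\|v\|^2$ for real $c$.

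There is no genuine obstacle. The only point needing care is the interpretation of ``each side is a mixture.'' The statement concerns the population child laws, that is, the marginal law of a response drawn from that child. It does not concern the realized regime labels in the sample. Under the fixed-split convention of the appendix, within-child responses are i.i.d.\ from the child law, so $\mu_{g,s}$ is well defined as that population mean.

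I would close with a remark tying the result to the visibility discussion. The factor $(\alpha_L-\alpha_R)^2$ is shared by every criterion, so the two criteria differ only through $\|m_{g,1}-m_{g,0}\|^2$. When $P_0$ and $P_1$ differ only in scale, shape, or dependence with equal means, this contrast is zero for $g(Y)=Y$. For a characteristic kernel's embedding it is positive whenever $P_0\neq P_1$, and for the finite-$B$ RFF map it is positive with high probability over the frequencies.
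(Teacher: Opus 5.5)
Your proof is correct and matches the paper. The paper gives no written proof of Proposition~\ref{prop:visibility}, treating it as immediate from linearity of the child mean in the mixing weights, which is exactly your computation $\mu_{g,s}=m_{g,0}+\alpha_s(m_{g,1}-m_{g,0})$ followed by substitution into $G_g=q(1-q)\|\delta_g\|^2$; in your closing aside, ``with high probability'' for the finite-$B$ RFF map should be read as probability $1-p^B$ for some $p<1$ determined by where the characteristic functions of $P_0$ and $P_1$ agree, which is the paper's caveat that the sampled frequencies must see the alternative.
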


The cutpoint-dependent factor $q(1-q)(\alpha_L-\alpha_R)^2$ is shared by every
embedding; criteria differ only through the constant $\|m_{g,1}-m_{g,0}\|^2$.
Hence every embedding that can see $P_0\neq P_1$ induces the \emph{same}
population ranking over cutpoints, up to that constant.  CART's constant is
$\|\mathbb E_{P_1}Y-\mathbb E_{P_0}Y\|^2$ and vanishes exactly when the two
regimes share a mean.  On \texttt{paper\_quantile\_1}, a pure
location shift, CART and MMD share population signal and any difference is
finite-sample.  On \texttt{paper\_quantile\_2} (equal means, scale
$1\!\to\!2$), \texttt{paper\_quantile\_3} (Gaussian vs.\ exponential, both unit
mean), CART's population contrast is exactly zero while the Gaussian-kernel
contrast is positive.

The copula mechanism is not a two-regime model: its conditional law varies
continuously with $X_1\sim\operatorname{Unif}(0,1)$.  Nevertheless, any interior
cut $X_1=t$ has zero child-mean contrast, while for distinct response coordinates
$j\ne k$ the child covariances are
$\operatorname{Cov}(Y_j,Y_k\mid X_1\le t)=t/2$ and
$\operatorname{Cov}(Y_j,Y_k\mid X_1>t)=(1+t)/2$.  Thus the joint child laws
differ, and their exact Gaussian-kernel MMD is positive.

\begin{table}[h]
  \centering
  \caption{Population split visibility for the synthetic diagnostics.  The
  quantile mechanisms change regime at $X_1=0$; the copula result holds at any
  interior cut $X_1=t\in(0,1)$.  ``Mean'' is the CART contrast
  $\|\delta_Y\|$ and ``kernel'' is the exact Gaussian-MMD contrast.}
  \label{tab:signal}
  \begin{tabular}{lccc}
    \toprule
    mechanism & distributional change & mean signal & kernel signal \\
    \midrule
    \texttt{paper\_quantile\_1} & $\mathcal N(0.8,1)$ vs.\ $\mathcal N(0,1)$ & yes & yes \\
    \texttt{paper\_quantile\_2} & $\mathcal N(0,4)$ vs.\ $\mathcal N(0,1)$ & $0$ & yes \\
    \texttt{paper\_quantile\_3} & $\mathrm{Exp}(1)$ vs.\ $\mathcal N(1,1)$ & $0$ & yes \\
    \texttt{paper\_copula} & corr.\ $X_1$, fixed $\mathcal N(0,1)$ marginals & $0$ & joint only \\
    \bottomrule
  \end{tabular}
\end{table}

For the Gaussian kernel $k(y,y')=\exp\{-(y-y')^2/2\sigma^2\}$ the first two
scalar contrasts are closed-form, using
$\mathbb E_{\mathcal N(m,v)}\exp\{-U^2/2\sigma^2\}=
\frac{\sigma}{\sqrt{\sigma^2+v}}\exp\{-m^2/2(\sigma^2+v)\}$:
\[
  \mmd^2\bigl\{\mathcal N(0,1),\mathcal N(0.8,1)\bigr\}
  = \frac{2\sigma}{\sqrt{\sigma^2+2}}
    \left[1-\exp\!\left\{-\frac{0.8^2}{2(\sigma^2+2)}\right\}\right],
\]
\[
  \mmd^2\bigl\{\mathcal N(0,1),\mathcal N(0,4)\bigr\}
  = \frac{\sigma}{\sqrt{\sigma^2+2}}
  + \frac{\sigma}{\sqrt{\sigma^2+8}}
  - \frac{2\sigma}{\sqrt{\sigma^2+5}} ,
\]
which are exactly the \texttt{paper\_quantile\_1} and \texttt{paper\_quantile\_2}
splits.  Both are strictly positive for every bandwidth $\sigma$: the first
confirms that both criteria see the location shift, while the second retains
kernel signal despite a zero CART contrast.

\subsection{What a uniform bound can and cannot give}

Fix the bandwidth and condition on the predictor-derived candidate grid.  For a
finite candidate set $\mathcal S$, let $G_k(s)$ be the exact-kernel population
score and $\widehat G_B(s)$ the empirical RFF score.  Bounded-feature
concentration and a union bound over $\mathcal S$ yield a uniform control of
schematic form
\[
  \sup_{s\in\mathcal S}\bigl|\widehat G_B(s)-G_k(s)\bigr|
  \;\lesssim\;
  \sqrt{\tfrac{\log(|\mathcal S|/\alpha)}{n_{\min}}}
  + \tfrac{\log(|\mathcal S|/\alpha)}{n_{\min}}
  + \sqrt{\tfrac{\log(|\mathcal S|/\alpha)}{B}},
\]
whose three terms are within-node response sampling, the quadratic remainder, and
the random-feature approximation \citep{rahimi2007,gretton2012}.  On an event
where the right-hand side is at most $\varepsilon$, the empirical maximizer has
population regret at most $2\varepsilon$; if the unique population maximizer is
separated from every competitor by more than $2\varepsilon$, it is recovered
exactly.  The same argument gives informative-feature recovery when that margin
separates the best informative split from all uninformative splits.  The bound
captures minimum child size $n_{\min}$, the number of searched cutpoints
$|\mathcal S|$, and the feature budget $B$.  We state it schematically because a
sharp constant is not the point and is genuinely harder than the bound suggests
(Remark~\ref{rem:hard}).

\begin{remark}\label{rem:hard}
A universal threshold at which MMD splitting overtakes CART is not available
without heavily restricting the model.  There is no universal ordering: under a
fixed-split homoskedastic Gaussian location alternative, the standardized mean
difference is the likelihood-ratio statistic and CART is locally optimal.
Under mean-preserving variance, shape, or dependence alternatives, CART has zero
population contrast while an exact characteristic-kernel MMD has positive
contrast; a fixed finite RFF map additionally requires that its sampled
frequencies see the alternative.  Beyond this, tree building
maximizes the score over strongly correlated cutpoints (so fixed-split SNR is not
selection probability), the RFF frequencies are resampled per node, and the
median bandwidth is estimated once from the training responses and held fixed
for the forest.  Recursive splitting changes later node laws, and converting
root-node selection power into final CRPS or energy risk requires analyzing the
adaptive forest partition and its aggregation.  The original DRF
analysis already supplies the RKHS--CART interpretation and whole-forest
consistency \citep{cevid2022}, and fixed-$B$ RFF cannot support unrestricted
consistency claims \citep{choi2024}; the contribution here is the exact
finite-node comparison, not a dominance theorem.
\end{remark}

\section{Shrinkage Frontier}

Post-hoc shrinkage of forest weights is a separate lever from the split
criterion.  Table~\ref{tab:shrinkage} pairs each shrinkage variant against raw
weights within a fixed criterion and forest, so the difference isolates the
shrinkage effect.  Shrinkage does not change the criterion ranking: marginal
shrinkage is at best a small CRPS improvement on one synthetic mechanism, parent
shrinkage runs to large intensity and consistently hurts, and the pattern is the
same across CART, MMD, and sliced-Wasserstein forests.  Shrinkage is not the
missing ingredient that would let the MMD variants dominate CART on scalar data.

\begin{table}[h]
  \centering
  \caption{Shrinkage frontier: paired differences (variant $-$ \texttt{raw}) within a fixed criterion and forest. Negative means shrinkage improves over raw weights; $\bar\alpha$ is the mean shrinkage intensity.}
  \label{tab:shrinkage}
  \begin{adjustbox}{max width=\textwidth}
  \begin{tabular}{lcccccccc}
    \toprule
    dataset & criterion & honesty & variant & pairs & $\bar\alpha$ & $\Delta$RMSE & $\Delta$CRPS & $\Delta$energy \\
    \midrule
    enb & cart & 0.5 & marginal\_kmse & 5 & 0.003 & -0.0001 $\pm$ 0.0008 (40\%) & +0.0009 $\pm$ 0.0001 (0\%) & +0.0012 $\pm$ 0.0002 (0\%) \\
    enb & cart & 0.5 & marginal\_stein & 5 & 0.003 & -0.0001 $\pm$ 0.0008 (40\%) & +0.0009 $\pm$ 0.0001 (0\%) & +0.0012 $\pm$ 0.0002 (0\%) \\
    enb & cart & 0.5 & parent\_kmse & 5 & 0.282 & +0.0472 $\pm$ 0.0053 (0\%) & +0.0351 $\pm$ 0.0017 (0\%) & +0.0509 $\pm$ 0.0022 (0\%) \\
    enb & cart & 0.5 & parent\_stein & 5 & 0.402 & +0.0612 $\pm$ 0.0084 (0\%) & +0.0494 $\pm$ 0.0029 (0\%) & +0.0719 $\pm$ 0.0039 (0\%) \\
    enb & mmd\_rff & 0.5 & marginal\_kmse & 5 & 0.003 & -0.0001 $\pm$ 0.0008 (40\%) & +0.0009 $\pm$ 0.0001 (0\%) & +0.0013 $\pm$ 0.0002 (0\%) \\
    enb & mmd\_rff & 0.5 & marginal\_stein & 5 & 0.003 & -0.0001 $\pm$ 0.0008 (40\%) & +0.0009 $\pm$ 0.0001 (0\%) & +0.0013 $\pm$ 0.0002 (0\%) \\
    enb & mmd\_rff & 0.5 & parent\_kmse & 5 & 0.286 & +0.0487 $\pm$ 0.0044 (0\%) & +0.0361 $\pm$ 0.0013 (0\%) & +0.0522 $\pm$ 0.0016 (0\%) \\
    enb & mmd\_rff & 0.5 & parent\_stein & 5 & 0.408 & +0.0623 $\pm$ 0.0070 (0\%) & +0.0507 $\pm$ 0.0023 (0\%) & +0.0735 $\pm$ 0.0030 (0\%) \\
    enb & sliced\_wasserstein & 0.5 & marginal\_kmse & 5 & 0.003 & -0.0001 $\pm$ 0.0008 (40\%) & +0.0009 $\pm$ 0.0001 (0\%) & +0.0012 $\pm$ 0.0002 (0\%) \\
    enb & sliced\_wasserstein & 0.5 & marginal\_stein & 5 & 0.003 & -0.0002 $\pm$ 0.0008 (40\%) & +0.0009 $\pm$ 0.0001 (0\%) & +0.0012 $\pm$ 0.0002 (0\%) \\
    enb & sliced\_wasserstein & 0.5 & parent\_kmse & 5 & 0.275 & +0.0495 $\pm$ 0.0052 (0\%) & +0.0360 $\pm$ 0.0015 (0\%) & +0.0520 $\pm$ 0.0019 (0\%) \\
    enb & sliced\_wasserstein & 0.5 & parent\_stein & 5 & 0.396 & +0.0644 $\pm$ 0.0081 (0\%) & +0.0510 $\pm$ 0.0027 (0\%) & +0.0740 $\pm$ 0.0035 (0\%) \\
    paper\_quantile\_2 & cart & 0.5 & marginal\_kmse & 5 & 0.316 & -0.0007 $\pm$ 0.0002 (100\%) & +0.0018 $\pm$ 0.0003 (0\%) & +0.0018 $\pm$ 0.0003 (0\%) \\
    paper\_quantile\_2 & cart & 0.5 & marginal\_stein & 5 & 0.460 & -0.0010 $\pm$ 0.0003 (100\%) & +0.0026 $\pm$ 0.0005 (0\%) & +0.0026 $\pm$ 0.0005 (0\%) \\
    paper\_quantile\_2 & cart & 0.5 & parent\_kmse & 5 & 0.839 & -0.0004 $\pm$ 0.0003 (80\%) & +0.0009 $\pm$ 0.0002 (0\%) & +0.0009 $\pm$ 0.0002 (0\%) \\
    paper\_quantile\_2 & cart & 0.5 & parent\_stein & 5 & 1.000 & -0.0004 $\pm$ 0.0004 (80\%) & +0.0013 $\pm$ 0.0003 (0\%) & +0.0013 $\pm$ 0.0003 (0\%) \\
    paper\_quantile\_2 & mmd\_rff & 0.5 & marginal\_kmse & 5 & 0.059 & -0.0001 $\pm$ 0.0001 (80\%) & +0.0002 $\pm$ 0.0001 (20\%) & +0.0002 $\pm$ 0.0001 (20\%) \\
    paper\_quantile\_2 & mmd\_rff & 0.5 & marginal\_stein & 5 & 0.066 & -0.0002 $\pm$ 0.0001 (80\%) & +0.0002 $\pm$ 0.0002 (40\%) & +0.0002 $\pm$ 0.0002 (40\%) \\
    paper\_quantile\_2 & mmd\_rff & 0.5 & parent\_kmse & 5 & 0.903 & -0.0002 $\pm$ 0.0005 (60\%) & -0.0001 $\pm$ 0.0003 (60\%) & -0.0001 $\pm$ 0.0003 (60\%) \\
    paper\_quantile\_2 & mmd\_rff & 0.5 & parent\_stein & 5 & 1.000 & -0.0003 $\pm$ 0.0005 (60\%) & -0.0002 $\pm$ 0.0003 (60\%) & -0.0002 $\pm$ 0.0003 (60\%) \\
    paper\_quantile\_2 & sliced\_wasserstein & 0.5 & marginal\_kmse & 5 & 0.057 & -0.0002 $\pm$ 0.0001 (80\%) & +0.0002 $\pm$ 0.0001 (20\%) & +0.0002 $\pm$ 0.0001 (20\%) \\
    paper\_quantile\_2 & sliced\_wasserstein & 0.5 & marginal\_stein & 5 & 0.064 & -0.0001 $\pm$ 0.0000 (80\%) & +0.0002 $\pm$ 0.0002 (20\%) & +0.0002 $\pm$ 0.0002 (20\%) \\
    paper\_quantile\_2 & sliced\_wasserstein & 0.5 & parent\_kmse & 5 & 0.904 & -0.0000 $\pm$ 0.0003 (60\%) & +0.0001 $\pm$ 0.0002 (40\%) & +0.0001 $\pm$ 0.0002 (40\%) \\
    paper\_quantile\_2 & sliced\_wasserstein & 0.5 & parent\_stein & 5 & 1.000 & -0.0000 $\pm$ 0.0004 (60\%) & +0.0001 $\pm$ 0.0003 (40\%) & +0.0001 $\pm$ 0.0003 (40\%) \\
    shrinkage\_toy & cart & 0.5 & marginal\_kmse & 5 & 0.040 & +0.0076 $\pm$ 0.0018 (0\%) & +0.0038 $\pm$ 0.0007 (0\%) & +0.0061 $\pm$ 0.0011 (0\%) \\
    shrinkage\_toy & cart & 0.5 & marginal\_stein & 5 & 0.042 & +0.0081 $\pm$ 0.0020 (0\%) & +0.0040 $\pm$ 0.0007 (0\%) & +0.0064 $\pm$ 0.0012 (0\%) \\
    shrinkage\_toy & cart & 0.5 & parent\_kmse & 5 & 0.379 & +0.0137 $\pm$ 0.0036 (0\%) & +0.0072 $\pm$ 0.0014 (0\%) & +0.0135 $\pm$ 0.0023 (0\%) \\
    shrinkage\_toy & cart & 0.5 & parent\_stein & 5 & 0.586 & +0.0229 $\pm$ 0.0053 (0\%) & +0.0119 $\pm$ 0.0023 (0\%) & +0.0223 $\pm$ 0.0036 (0\%) \\
    shrinkage\_toy & mmd\_rff & 0.5 & marginal\_kmse & 5 & 0.039 & +0.0075 $\pm$ 0.0019 (0\%) & +0.0037 $\pm$ 0.0007 (0\%) & +0.0060 $\pm$ 0.0011 (0\%) \\
    shrinkage\_toy & mmd\_rff & 0.5 & marginal\_stein & 5 & 0.041 & +0.0079 $\pm$ 0.0020 (0\%) & +0.0039 $\pm$ 0.0007 (0\%) & +0.0063 $\pm$ 0.0012 (0\%) \\
    shrinkage\_toy & mmd\_rff & 0.5 & parent\_kmse & 5 & 0.376 & +0.0131 $\pm$ 0.0036 (0\%) & +0.0068 $\pm$ 0.0015 (0\%) & +0.0129 $\pm$ 0.0024 (0\%) \\
    shrinkage\_toy & mmd\_rff & 0.5 & parent\_stein & 5 & 0.578 & +0.0216 $\pm$ 0.0054 (0\%) & +0.0112 $\pm$ 0.0024 (0\%) & +0.0211 $\pm$ 0.0037 (0\%) \\
    shrinkage\_toy & sliced\_wasserstein & 0.5 & marginal\_kmse & 5 & 0.039 & +0.0076 $\pm$ 0.0019 (0\%) & +0.0038 $\pm$ 0.0007 (0\%) & +0.0061 $\pm$ 0.0011 (0\%) \\
    shrinkage\_toy & sliced\_wasserstein & 0.5 & marginal\_stein & 5 & 0.042 & +0.0081 $\pm$ 0.0020 (0\%) & +0.0040 $\pm$ 0.0007 (0\%) & +0.0065 $\pm$ 0.0012 (0\%) \\
    shrinkage\_toy & sliced\_wasserstein & 0.5 & parent\_kmse & 5 & 0.383 & +0.0131 $\pm$ 0.0034 (0\%) & +0.0068 $\pm$ 0.0013 (0\%) & +0.0130 $\pm$ 0.0021 (0\%) \\
    shrinkage\_toy & sliced\_wasserstein & 0.5 & parent\_stein & 5 & 0.591 & +0.0220 $\pm$ 0.0051 (0\%) & +0.0114 $\pm$ 0.0022 (0\%) & +0.0216 $\pm$ 0.0034 (0\%) \\
    \bottomrule
  \end{tabular}
  \end{adjustbox}
\end{table}

\end{document}